\pgfplotsset{compat = 1.9}
\pgfplotsset{
    colormap={Spectral}{
        rgb255=(158,1,66)
        rgb255=(213,62,79)
        rgb255=(244,109,67)
        rgb255=(253,174,97)
        rgb255=(254,224,139)
        rgb255=(255,255,191)
        rgb255=(230,245,152)
        rgb255=(171,221,164)
        rgb255=(102,194,165)
        rgb255=(50,136,189)
        rgb255=(94,79,162)
    }
}
\definecolor{darkgreen}{RGB}{9, 133, 9}
\theoremstyle{plain}
\newtheorem{theorem}{Theorem}[section]
\newtheorem{proposition}[theorem]{Proposition}
\newtheorem{lemma}[theorem]{Lemma}
\theoremstyle{definition}
\newtheorem{definition}[theorem]{Definition}
\theoremstyle{remark}
\newtheorem{remark}[theorem]{Remark}
\Crefname{proposition}{Proposition}{Propositions}
\crefname{problem}{Problem}{Problems}
\Crefname{lemma}{Lemma}{Lemmas}
\newaliascnt{problem}{equation}
\def\endproblem{\eqno \hbox{\@eqnnum}$$\@ignoretrue}
\crefname{problem}{Problem}{Problems}
\crefname{algorithm}{Algorithm}{Algorithms}
\crefname{figure}{Figure}{Figures}
\crefname{proposition}{Proposition}{Propositions}
\crefname{appendix}{Appendix}{Appendix}
\newcolumntype{H}{>{\setbox0=\hbox\bgroup}c<{\egroup}@{}} %
\newcommand{\ie}{{\em i.e.~}}
\newcommand{\eg}{{\em e.g.~}}
\newcommand{\st}{\text{s.t.~}}
\newcommand{\mG}{\mathscr{G}}
\newcommand{\mH}{\mathscr{H}}
\newcommand{\mL}{\mathscr{L}}
\newcommand{\mM}{\mathscr{M}}
\newcommand{\mO}{\mathscr{O}}
\newcommand{\mS}{\mathscr{S}}
\newcommand{\mX}{\mathscr{X}}
\newcommand{\cB}{\mathcal{B}}
\newcommand{\cC}{\mathcal{C}}
\newcommand{\cK}{\mathcal{K}}
\newcommand{\cL}{\mathcal{L}}
\newcommand{\cU}{\mathcal{U}}
\newcommand\R{\mathbb{R}}
\newcommand\reals{\mathbb{R}}
\DeclareMathOperator{\prox}{prox}
\DeclareMathOperator{\Proj}{Proj}
\DeclareMathOperator{\Span}{Span}
\DeclareMathOperator{\Tr}{Tr}
\DeclareMathOperator{\infconv}{\square}
\newcommand{\scalone}[1]{\langle \rangle}
\newcommand{\norm}[1]{\left\lVert {#1} \right\rVert}
\newcommand{\abs}[1]{\left\lvert #1 \right\rvert}
\newcommand{\argmin}{\mathop{\mathrm{arg\,min}}}
\newcommand{\argmax}{\mathop{\mathrm{arg\,max}}}
\def\trace{\mathop{\mathrm{Tr}}}
\def\fro{\mathrm{F}}
\icmltitlerunning{Extending Kernel PCA through Dualization}
\begin{document}

\twocolumn[
\icmltitle{Extending Kernel PCA through Dualization:\\ Sparsity, Robustness and Fast Algorithms}

\icmlsetsymbol{equal}{*}

\begin{icmlauthorlist}
\icmlauthor{Francesco Tonin}{equal,yyy}
\icmlauthor{Alex Lambert}{equal,yyy}
\icmlauthor{Panagiotis Patrinos}{yyy}
\icmlauthor{Johan~A.K. Suykens}{yyy}
\end{icmlauthorlist}

\icmlaffiliation{yyy}{ESAT-STADIUS, KU Leuven, Belgium}

\icmlcorrespondingauthor{Francesco Tonin}{francesco.tonin@esat.kuleuven.be}

\vskip 0.3in
]

\printAffiliationsAndNotice{\icmlEqualContribution} %

\begin{abstract}

    The goal of this paper is to revisit Kernel Principal Component Analysis (KPCA) through dualization of a difference of convex functions.
    This allows to naturally extend KPCA to multiple objective functions and leads to efficient gradient-based algorithms avoiding the expensive SVD of the Gram matrix.
    Particularly, we consider objective functions that can be written as Moreau envelopes, demonstrating how to promote robustness and sparsity within the same framework. 
    The proposed method is evaluated on synthetic and real-world benchmarks, showing significant speedup in KPCA training time as well as highlighting the benefits in terms of robustness and sparsity.
   \end{abstract}
\section{Introduction}
\label{sec:introduction}
Kernel Principal Components Analysis (KPCA, \citet{scholkopf1998}) stands as one of the most widely used tools for unsupervised learning, with applications to dimensionality reduction, denoising, or features extraction.
By embedding the data in some higher dimensional space thanks to a feature map, KPCA aims at finding orthogonal directions in the feature space that allow to best reconstruct the empirical covariance operator.

The classical way to solve the KPCA problem is to compute the singular values decomposition (SVD) of the Gram matrix, a costly $\mathcal{O}(n^3)$ operation whose complexity scales cubicly with respect to the number $n$ of datapoints.
This prevents KPCA from being used in large-scale scenarios, and, while some solutions exist, they mostly boil down to subsampling in fixed-size schemes to approximate the nonlinear mapping \cite{langone2017} or focus on the online learning setting \citep{gunter2007,chin2007,honeine2012}. Works on speeding up the simpler problem of linear PCA exist (see \citep{gemp2021} and references therein), but they cannot deal with infinite-dimensional feature spaces.

From an optimization standpoint, the KPCA problem can be formulated as variance maximization under orthonormality constraints.
Such problem belongs to the wider family of differences of convex functions (DC) problems, which has received a great deal of attention in multiple applications (e.g., see \citep{tao1997}).
In particular, in \citep{beck2021}, PCA was investigated as a DC problem, but only in the case of linear PCA, i.e., not considering (potentially infinite-dimensional) feature mappings, and only for the simpler problem of finding the first component where the orthogonality constraints become void. 

Enforcing desirable properties to the solution, such as sparsity or robustness, is a long-standing open problem in KPCA.
While many works have investigated robust/sparse KPCA (e.g., robust KPCA in \citep{nguyen2008, kim2020, Wang2020ARM, fan2020} and sparse KPCA in \citep{wang2016, guo2019sparse, tipping2000,smola2022}), they use several different ad-hoc approaches or heuristics such as weighting schemes \citep{alzate2008}, leading to a multitude of different optimization problems. In \citep{thiao2010}, a DC program for the specific problem of sparse linear PCA was explained, but is not extended to also handle robust losses within the same framework; notably, it does not deal with nonlinear feature maps nor with more than one component.
The idea of using infimal convolution to design sparse or robust losses is exploited notably in \citet{sangnier2017data, laforgue2020duality} for regression problems, and is known to work well in duality settings. 

In this paper, we derive a general dual-based formulation for KPCA leading to a difference of convex function objective which encompasses both the variance and robust/sparse objective functions in the same framework. 
We derive efficient optimization algorithms showing significant speedups compared to the standard SVD solvers for KPCA.
In particular, our approach allows to solve \emph{infinite}-dimensional KPCA problems in the dual.
We focus on objectives that can be written as Moreau envelopes, which include the Huber and $\epsilon$-insensitive losses, inducing robustness and sparsity, respectively. 
We show how the resulting optimization problems can be tackled with efficient algorithms for each objective.

The paper is structured as follows. 
In \Cref{sec:background}, we formulate the general KPCA problem as a difference of convex functions, which leads to a flexible framework that  can be extended to other loss functions beyond the square loss. 
We present in \Cref{sec:optimization} a gradient-based optimization algorithm able to efficiently solve the standard KPCA problem through dualization.
Later, in \Cref{sec:flexible} we exploit the flexibility of the proposed DC framework by modifying the objective function to promote robustness and sparsity.
Finally, numerical experiments on both synthetic and real-world benchmarks are presented in \Cref{sec:experiments}, showing faster training times and illustrating the promotion of sparsity and robustness induced in the solution.
All proofs are deferred to the appendix.
\section{Problem Formulation}
\label{sec:background}
\paragraph{Notation:}
Given a symmetric real matrix $M \in \R^{n \times n}$, $\lambda(M) \in \R^n$ is the vector of its eigenvalues ordered decreasingly. 
For a linear operator $\Gamma$ between Hilbert spaces, $\Gamma^\sharp$ denotes its adjoint.
When $w \in \mH$ is a vector, $w^\sharp$ refers to the linear form $x \mapsto \langle w, x \rangle$.
$I_s$ is the identity matrix of size $s \times s$.
$\phi \colon \mX \to \mH$ is a mapping to a Hilbert space $\mH$.
$\phi$ induces a positive definite kernel function $k \colon \mX \times \mX \to \R$ with associated RKHS $\mH_k$.
Given a vector $W=(w_1, \ldots, w_s) \in \mH^s$, we denote by $\mG(W) \in \R^{s \times s}$ the Gram matrix such that $\mG(W)_{ij} = \langle w_i, w_j \rangle$. $\norm{\cdot}_\fro$ denotes the Frobenius norm. For a convex set $\cC$, $\iota_{\cC}(\cdot)$ is its indicator function: $0$ on $\cC$ and $+\infty$ otherwise. The infimal convolution is denoted $\infconv$, and the Fenchel-Legendre transform of a function $f$ is $f^\star$.
\par
Let $\mX$ be some input space, and $(x_i)_{i=1}^n \in \mX^n$ $n$ \mbox{datapoints} in it.
These datapoints are embedded in a feature space $\mH$ by means of a feature map $\phi \colon \mX \to \mH$.
Up to some recentering of $\phi$, we assume that $\sum_{i=1}^n \phi(x_i) = 0$.
The KPCA problem consists in finding orthogonal directions in the feature space that allow for the best low rank approximation of some empirical version of the covariance operator
\begin{align*}
  \Sigma \coloneqq \frac{1}{n}\sum_{i=1}^n \phi(x_i) \phi(x_i)^\sharp \in \mL(\mH).
\end{align*}
Given a desired number of components $s$, KPCA can be reformulated as finding $s$ directions in the Hilbert space $\mH$ that maximize the variance under orthonormal conditions.
When $\mH$ is some finite dimensional $\reals^p$ space, and denoting $\Phi \in \reals^{n \times p}$ the row-wise concatenations of the $[\phi(x_i)]_{i=1}^n$, this reduces to
\begin{equation*}
 \sup_{W \in \reals^{p \times s}} \norm{\Phi W}^2_\fro ~\st~ W^\top W = I_s.
\end{equation*}
\paragraph{The Stiefel manifold over Hilbert spaces:}
Given a feature space $\mH$ and a positive integer $s$, we introduce the Stiefel manifold of orthonormal $s$-frames in $\mH$ as
\begin{equation*}
  \mS_\mH^s \coloneqq \{
     W \in \mH^s ~|~
     \mG(W)= I_s
  \}.
\end{equation*}
When $\mH = \reals^d$ is endowed with the Euclidean scalar product, the Stiefel manifold corresponds to the usual set of matrices $\mS_{\reals^d}^s = \left \{ M \in \mM_{d,s}(\reals) | M^\top M = I_s \right \}$.
We define $\Gamma \colon \mH^s \to \reals^{n \times s}$ as the linear operator such that for all $(i,j) \in [n \times s]$ and $W =(w_1, \ldots, w_s) \in \mH^s$, $[\Gamma W]_{ij} = \langle \phi(x_i), w_j \rangle$.
The KPCA problem then reduces to
\begin{problem}
  \label{pbm:primal_kpca}
  \sup_{W \in \mS_\mH^s} \frac{1}{2} \norm{ \Gamma W}^2_\fro.
\end{problem}
While the constraint $W \in \mS_\mH^s$ is not convex, it can be relaxed to the convex hull of the Stiefel manifold as the solutions necessarily lie on the boundary (\citet{uschmajew2010well}, Lemma 2.7) which justifies the fact that we consider the constraint convex in what follows.
\par
\paragraph*{Kernel PCA from SVD:}
The usual way to solve the KPCA problem is to express the directions $W = [w_j]_{j=1}^s$ as linear combinations of the features, introducing coefficients $(\alpha_{ij})_{i,j=1}^{n,s} \in \R^{n \times s}$ such that 
\begin{equation*}
  w_j = \sum_{i=1}^n \alpha_{ij} \phi(x_i).
\end{equation*}
Quick algebraic manipulations then ensure that the solution to \Cref{pbm:primal_kpca} can be obtained by taking the coefficients $\alpha$ to be the top-$s$ eigenvectors of the Gram matrix $G = [k(x_i, x_j)]_{i,j=1}^n$, rescaled so that the directions have unit norm.
Thus KPCA is solved by performing the SVD of some $n \times n$ matrix, an operation scaling as $\mO(n^3)$ and quite slow even for moderate sizes of datasets.
Throughout this paper we make the assumption that $G$ is full rank, which typically happens when the feature space is infinite dimensional (\eg Gaussian kernel) and the data does not contain any duplicate.
This assumption is critical to the derivation of the gradient in \Cref{sec:optimization}.
\paragraph{Difference of convex functions:}
The KPCA optimization problem can be cast into the minimization of a difference of convex functions of the form
\begin{problem}
  \label{pbm:primal_dc}
  \inf_{W \in \mH^s} ~ g(W) - f(\Gamma W).
\end{problem}
Indeed, plugging back $f = \frac{1}{2} \norm{\cdot}_{\fro}^2$ and $g = \iota_{\mS_\mH^s}(\cdot)$ into \Cref{pbm:primal_dc} one gets back the original formulation from \Cref{pbm:primal_kpca}.
One of the advantages of this formulation is that it becomes possible to slightly modify the loss function $f$ so as to enforce specific properties such as robustness or sparsity for the solution.
\par
The goal of this work is two-fold:
\begin{itemize}
  \item Exploit gradient-based optimization schemes to solve KPCA without using the SVD of $G$, resulting in computational advantages (see \Cref{sec:optimization}).
  \item Extend the KPCA problem to a larger set of objective functions, with the benefit of enforcing robust or sparse solutions (see \Cref{sec:flexible}).
\end{itemize}
\section{Solving Kernel PCA with Gradient Descent}
\label{sec:optimization}
In this section, we use duality principles to derive suitable optimization algorithms to solve the KPCA problem introduced in \Cref{pbm:primal_kpca}.
We begin by a general proposition performing the dualization of problems that can be written as the minimization of a difference of two convex functions.
The proof is similar to \citet{toland1979subdifferential}, with an additional care given to handling the operator $\Gamma$.
\begin{proposition}[Dual of difference of convex functions]
\label{prop:general_dual}
Let $\cU, \cK$ be two Hilbert spaces, $g \colon \cU \to \bar{\R}$ and $f \colon \cK \to \bar{\R}$ be two convex lower semi-continuous functions and $\Gamma \in \cL(\cU, \cK)$.
The problem
\begin{equation*}
    \inf_{W \in \cU} ~ g(W) - f(\Gamma W)
\end{equation*}
admits the dual formulation
\begin{problem}
	\label{pbm:dual_dc}
    \inf_{H \in \cK} ~ f^\star(H) - g^\star(\Gamma^\sharp H),
\end{problem}
and strong duality holds.
\end{proposition}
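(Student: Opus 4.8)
The plan is to follow the Toland dualization scheme, whose engine is the Fenchel--Moreau biconjugation theorem: since $f$ is proper, convex and lower semi-continuous, it coincides with its biconjugate, so that for every $W \in \cU$ one may write $f(\Gamma W) = f^{\star\star}(\Gamma W) = \sup_{H \in \cK} \langle H, \Gamma W \rangle - f^\star(H)$. Substituting this identity into the primal objective turns the subtracted term into a supremum, and negating it converts that supremum into an infimum, yielding $g(W) - f(\Gamma W) = \inf_{H \in \cK} \big( g(W) + f^\star(H) - \langle H, \Gamma W \rangle \big)$.

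Next I would take the infimum over $W$ on both sides and interchange the two infima, which is always legitimate for a double infimum and hence introduces no gap: the primal value equals $\inf_{H \in \cK} \inf_{W \in \cU} \big( g(W) + f^\star(H) - \langle H, \Gamma W \rangle \big)$. This is exactly where the operator $\Gamma$ must be handled with care. Using the adjoint identity $\langle H, \Gamma W \rangle = \langle \Gamma^\sharp H, W \rangle$, the inner problem in $W$ reads $\inf_{W} \big( g(W) - \langle \Gamma^\sharp H, W \rangle \big) = -\sup_{W} \big( \langle \Gamma^\sharp H, W \rangle - g(W) \big) = -g^\star(\Gamma^\sharp H)$, by the very definition of the Fenchel conjugate $g^\star$. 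Collecting terms gives $\inf_{H \in \cK} f^\star(H) - g^\star(\Gamma^\sharp H)$, which is precisely \Cref{pbm:dual_dc}.

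Finally, strong duality comes essentially for free: every step above is an exact equality rather than an inequality, so the primal and dual optimal values coincide with no gap to estimate. The main obstacle is therefore not the manipulation itself but guaranteeing that the biconjugation step is valid and that the extended-real arithmetic never produces an indeterminate $+\infty - \infty$. Concretely, I would verify properness of $f$ (so that $f = f^{\star\star}$ genuinely holds) and check that the objective $g - f \circ \Gamma$ is well defined on the effective domain of interest; in the KPCA instantiation this is immediate, since $g = \iota_{\mS_\mH^s}$ takes only the values $0$ and $+\infty$ while $f = \tfrac12\norm{\cdot}_\fro^2$ is everywhere finite, so the difference is unambiguous on the feasible set. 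The treatment of $\Gamma$ is the only genuine addition relative to the scalar Toland argument, and it reduces entirely to the adjoint swap used above.
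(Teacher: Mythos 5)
Your argument is essentially identical to the paper's proof: both replace $f(\Gamma W)$ by its biconjugate, convert the resulting supremum into an infimum, exchange the two infima as a joint infimum, and use the adjoint identity $\langle H, \Gamma W\rangle = \langle \Gamma^\sharp H, W\rangle$ to recognize $-g^\star(\Gamma^\sharp H)$, with strong duality following because every step is an equality. Your added remarks on properness and avoiding $+\infty - \infty$ are sensible but do not change the route.
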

The main motivation for going from the primal problem to the dual in the KPCA case is that the dual variable $H$ is a finite dimensional matrix, suitable to gradient-based optimization schemes.
However this comes at the cost of having to handle the term $g^\star(\Gamma^\sharp H)$ which encodes information related to the Stiefel manifold.
In \Cref{sec:nuclear_norm}, we show that this term is related to the nuclear norm of some low dimensional matrix and derive a gradient for it, before exploring in \Cref{sec:critical_dual} the link between the critical points of the dual function and those of the reconstruction cost associated to the Gram matrix.
Finally in \Cref{sec:optimal_dual}, we show how to compute the projections associated to new points without having to compute the primal solution.
\subsection{Nuclear Norm Gradient Computation}
\label{sec:nuclear_norm}
The following proposition instantiates the term $g^\star(\Gamma^\sharp H)$ for the KPCA problem.
\begin{proposition}
	\label{prop:fl_gstar}
	Let $g$ be the indicator function of the Stiefel manifold and $\Gamma$ as in \Cref{pbm:primal_kpca}.
	Then for all $H \in \R^{n \times s}$,
	\begin{equation}
		\label{eq:pi}
		g^\star(\Gamma ^\sharp H) = \Tr \sqrt{H^\top GH}
	\end{equation}
\end{proposition}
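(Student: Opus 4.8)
The plan is to unfold the Fenchel conjugate of the indicator and recognize the resulting constrained maximization as a nuclear-norm computation. Since $g = \iota_{\mS_\mH^s}$, its conjugate is the support function of the Stiefel manifold, so by the adjoint relation $\scal{\Gamma^\sharp H}{W} = \scal{H}{\Gamma W}$,
\begin{equation*}
  g^\star(\Gamma^\sharp H) = \sup_{W \in \mS_\mH^s} \scal{\Gamma^\sharp H}{W} = \sup_{W \in \mS_\mH^s} \scal{H}{\Gamma W}.
\end{equation*}
First I would set $U \coloneqq \Gamma^\sharp H = (u_1, \dots, u_s)$ with $u_j = \sum_{i=1}^n H_{ij}\,\phi(x_i)$, so that the objective reads $\sum_{j=1}^s \scal{u_j}{w_j}$. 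A direct computation then gives $\mG(U)_{kl} = \scal{u_k}{u_l} = \sum_{i,i'} H_{ik} G_{ii'} H_{i'l} = [H^\top G H]_{kl}$, i.e. $\mG(U) = H^\top G H$. It thus suffices to show that the support function of $\mS_\mH^s$ evaluated at $U$ equals $\Tr\sqrt{\mG(U)}$, which is exactly the nuclear norm $\nucnorm{U}$.

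Next I would reduce the problem to finite dimensions. Each $u_j$ lies in $\mH_0 \coloneqq \Span(\phi(x_1), \dots, \phi(x_n))$, which has dimension $n \geq s$ under the full-rank assumption on $G$. For any $W \in \mS_\mH^s$, decompose $w_j = w_j^\parallel + w_j^\perp$ with $w_j^\parallel$ the orthogonal projection of $w_j$ onto $\mH_0$. The objective only depends on $W^\parallel$ since $u_j \in \mH_0$, while $\mG(W^\parallel) = \mG(W) - [\scal{w_k^\perp}{w_l^\perp}]_{kl} \preceq I_s$. Hence the supremum over $\mS_\mH^s$ coincides with the supremum of the same linear functional over $\{W \in \mH_0^s : \mG(W) \preceq I_s\}$, which (as noted after \Cref{pbm:primal_kpca}) is the convex hull of $\mS_{\mH_0}^s$; since a linear functional over a compact convex set attains its maximum at an extreme point, the value equals that over $\mS_{\mH_0}^s$ itself. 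Identifying $\mH_0 \cong \R^n$, this legitimizes replacing the abstract frame problem by the matrix problem $\max_{W \in \R^{n\times s},\, W^\top W = I_s} \Tr(U^\top W)$.

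Finally I would settle the matrix identity via the singular value decomposition $U = P\Sigma Q^\top$, with $P \in \R^{n\times s}$ having orthonormal columns, $\Sigma = \diag(\sigma_1, \dots, \sigma_s)$, and $Q \in \R^{s\times s}$ orthogonal. For admissible $W$, set $Z = P^\top W Q$; then $\norm{Z}_{\mathrm{op}} \leq 1$, so $Z_{ii} \leq 1$ and $\Tr(U^\top W) = \Tr(\Sigma Z) = \sum_i \sigma_i Z_{ii} \leq \sum_i \sigma_i$, with equality at the feasible choice $W = PQ^\top$. Therefore the maximum equals $\sum_i \sigma_i = \Tr\Sigma = \Tr\sqrt{U^\top U} = \Tr\sqrt{\mG(U)} = \Tr\sqrt{H^\top G H}$, as claimed. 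The main obstacle I anticipate is not the SVD step, which is routine once finite-dimensional, but rather making the infinite-to-finite reduction rigorous: carefully justifying that the components of $W$ orthogonal to $\mH_0$ can be discarded without changing the value, and that passing to the convex hull leaves the supremum unchanged.
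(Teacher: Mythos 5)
Your proof is correct and follows essentially the same route as the paper's: both unfold the conjugate as the support function of the (relaxed) Stiefel manifold, restrict to the span of the features (your isometric identification of $\mH_0$ with $\R^n$ is exactly the paper's change of variables $B = G^{\frac{1}{2}}A$), and conclude by recognizing the nuclear norm of a matrix with Gram matrix $H^\top G H$ via an SVD argument. The only differences are presentational: you prove the trace-maximization identity from scratch and treat the orthogonal-complement and convex-hull/extreme-point issues explicitly, where the paper cites the known duality between the spectral-norm ball and the Schatten $1$-norm and compresses the reduction into ``using orthogonality properties.''
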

We recognize in \Cref{eq:pi} the nuclear norm of the matrix $\sqrt{H^\top GH}$, which is well-defined since $G$ is a Gram matrix associated to a positive definite kernel.
\begin{remark}
	In the proof, we first show that $g^\star(\Gamma ^\sharp H) = \norm{G^{\frac{1}{2}}H}_{S_1}$ where $\norm{\cdot}_{S_1}$ is the Schatten $1$-norm, \ie the nuclear norm.
	While the dependency in $H$ makes this easier to handle from an optimization standpoint, the dependency in $G^{\frac{1}{2}}$ would require to perform the SVD that we want to avoid, leaving us to work with $\Tr \sqrt{H^\top GH}$.
\end{remark}
In what follows, we define $\pi(H) := \Tr \sqrt{H^\top GH}$ and build on \citet{lewis1996derivatives} to give a gradient expression for $\pi$ used in subsequent optimization algorithms.
\begin{proposition}
	\label{prop:gradient_spectral_norm}
	If all eigenvalues of $H^\top G H$ are positive then $\pi$ is differentiable at $H$ with gradient
	\begin{equation}
		\label{eq:gradient_spectral_norm}
		\nabla \pi(H) = GHU^\top \mathrm{diag}\left( \frac{1}{\sqrt{\lambda(H^\top GH)}} \right)U,
	\end{equation}
	where $U \in \R^{s \times s}$ is an orthogonal matrix satisfying $H^\top G H = U^\top \mathrm{diag}\left( \lambda(H^\top GH) \right) U$.
\end{proposition}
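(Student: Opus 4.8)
The plan is to write $\pi$ as the composition of a spectral function on symmetric matrices with the quadratic map $H \mapsto H^\top G H$, differentiate each piece separately, and recombine them via the chain rule. First I would set $S(H) \coloneqq H^\top G H$, which takes values in the cone of symmetric positive semidefinite matrices, and $\psi(S) \coloneqq \Tr \sqrt{S}$, so that $\pi = \psi \circ S$. The map $\psi$ is a spectral function: writing $f(x) \coloneqq \sum_{i=1}^s \sqrt{x_i}$ on the positive orthant, one has $\psi(S) = f(\lambda(S))$, since the eigenvalues of $\sqrt{S}$ are the square roots of those of $S$ and the trace is their sum. The symmetric function $f$ is differentiable at any $x$ with strictly positive entries, with $\nabla f(x) = (1/(2\sqrt{x_i}))_i$. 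Under the hypothesis that all eigenvalues of $H^\top G H$ are positive, $\lambda(S)$ lies in the positive orthant, so by the theorem of \citet{lewis1996derivatives} on derivatives of spectral functions, $\psi$ is differentiable at $S = S(H)$ with
\[
  \nabla \psi(S) = U^\top \mathrm{diag}\!\left( \frac{1}{2\sqrt{\lambda(H^\top G H)}} \right) U,
\]
where $U$ diagonalizes $S$ exactly as in the statement.

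Next I would differentiate the quadratic map $S$, which is smooth: for a perturbation $\Delta \in \R^{n \times s}$, $DS(H)[\Delta] = \Delta^\top G H + H^\top G \Delta$. Since $\psi$ is differentiable at $S(H)$ and $S$ is everywhere differentiable, the chain rule gives that $\pi$ is differentiable at $H$. Writing $M \coloneqq \nabla \psi(S)$, a symmetric matrix, the directional derivative is $D\pi(H)[\Delta] = \scal{M}{\Delta^\top G H + H^\top G \Delta}$ for the trace inner product $\scal{A}{B} = \Tr(A^\top B)$. Using the symmetry of $M$ and $G$ together with the cyclic property of the trace, each of the two summands rewrites as $\scal{\Delta}{GHM}$, so that $D\pi(H)[\Delta] = 2\scal{\Delta}{GHM}$. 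Identifying the gradient then yields $\nabla \pi(H) = 2GHM = GH U^\top \mathrm{diag}(1/\sqrt{\lambda(H^\top G H)}) U$, which is exactly the claimed expression, the factor $2$ cancelling the $1/2$ in $\nabla f$.

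The main obstacle is the differentiability of $\psi$, which I would discharge entirely through Lewis's spectral-function theorem rather than by hand. The delicate point there is that $U$ is not unique when $S$ has repeated eigenvalues, so the gradient formula must be checked to be independent of this choice; this is precisely guaranteed by the symmetry of $f$, which forces $\nabla f(\lambda(S))$ to be constant on each block of repeated eigenvalues, making $U^\top \mathrm{diag}(\nabla f) U$ invariant under the residual orthogonal freedom within eigenspaces. A secondary point to state carefully is that positivity of all eigenvalues of $H^\top G H$ is exactly what places $\lambda(S)$ in the domain of differentiability of $f$, the square root being non-differentiable at $0$; this is the hypothesis of the proposition, and under the paper's standing assumption that $G$ is full rank it holds as soon as $H$ has full column rank.
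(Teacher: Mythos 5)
Your proposal is correct and follows essentially the same route as the paper: decompose $\pi = \psi \circ S$ with $S(H) = H^\top G H$ and $\psi = \Tr\sqrt{\cdot}$ viewed as the spectral function of $x \mapsto \sum_i \sqrt{x_i}$, invoke the theorem of \citet{lewis1996derivatives}, and combine via the chain rule. If anything, your explicit computation of $D\pi(H)[\Delta]$ via the trace inner product (and your remark on the invariance of the formula under the residual orthogonal freedom in $U$) is more careful than the paper's informal chain-rule step.
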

To compute the gradient given in \Cref{eq:gradient_spectral_norm}, one needs to perform the SVD of the matrix $H^\top GH \in \reals^{s \times s}$.
In the context of a (relatively) small number of components, this SVD is computationally cheap.
This motivates the use of gradient-based method to obtain the dual solution faster than by exploiting the SVD of the $n \times n$ matrix $G$.

More precisely, the computational complexity associated to the computation of this gradient can be bounded by the sum of 
\begin{itemize}
	\item Computation of $H^\top GH$ in $\mathcal{O}(sn^2)$
	\item SVD of $H^\top GH$ in $\mathcal{O}(s^3)$
	\item Computation of \Cref{eq:gradient_spectral_norm} in $\mathcal{O}(ns^2 + s^3)$ by reusing the precomputed $GH$.
\end{itemize}
\begin{remark}
	Due to the dependency of the gradient in $\frac{1}{\sqrt{\lambda({H^\top GH})}}$, it is not Lipschitz continuous in $H$,
which prevents the use of fixed stepsizes schemes in classical optimization algorithms.
\end{remark}
\subsection{Critical Points of the Dual Problem}
\label{sec:critical_dual}
Overall, the dual problem to KPCA reads
\begin{problem}
	\label{pbm:dual_kpca}
	\inf_{H \in \R^{n \times s}} \frac{1}{2} \Tr(H^\top H)  - \pi(H).
\end{problem}
Solving this nonconvex problem can be challenging, as local minima or saddle points phenomena can occur.
Typically, algorithms such as the DC algorithm \citep{tao1997} ensure convergence towards a critical point, that is a point $H$ at which the gradient is nullified.
In the following proposition, we explicit the link between critical points of the dual function and those of the reconstruction cost associated to the Gram matrix.
\begin{proposition} \label{prop:link_critical_reconstruction}
	Let $J(H) = \frac{1}{4} \norm{G - H H^\top }^2_\fro$.
	Let $\hat{H}$ be a critical point of \Cref{pbm:dual_kpca}.
	Then $\nabla J (\hat{H}) = 0$. 
\end{proposition}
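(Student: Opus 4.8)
The plan is to compute the two gradients explicitly and reduce the claim to a single matrix identity. First I would differentiate $J$. Since both $G$ and $HH^\top$ are symmetric, expanding $\norm{G - HH^\top}_\fro^2 = \Tr(G^2) - 2\Tr(GHH^\top) + \Tr((HH^\top)^2)$ and differentiating term by term gives
\begin{equation*}
  \nabla J(H) = HH^\top H - GH = (HH^\top - G)H.
\end{equation*}
Thus proving $\nabla J(\hat H) = 0$ amounts to establishing the identity $\hat H \hat H^\top \hat H = G \hat H$, and the whole proof reduces to extracting this from the dual criticality condition.

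Next I would write down the criticality condition for the dual objective $D(H) = \tfrac{1}{2}\Tr(H^\top H) - \pi(H)$. Its gradient is $\nabla D(H) = H - \nabla \pi(H)$, so a critical point satisfies $\hat H = \nabla \pi(\hat H)$. Setting $A \coloneqq \hat H^\top G \hat H$ (symmetric positive definite, since by hypothesis all its eigenvalues are positive) and using the gradient formula from \Cref{prop:gradient_spectral_norm}, the factor $U^\top \diag(1/\sqrt{\lambda(A)})\,U$ is precisely the principal inverse square root $A^{-1/2}$. Hence the condition reads $\hat H = G \hat H \, A^{-1/2}$, or equivalently, writing $B \coloneqq A^{1/2}$,
\begin{equation*}
  G \hat H = \hat H B.
\end{equation*}

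The key algebraic step is then to left-multiply this identity by $\hat H^\top$, giving $\hat H^\top G \hat H = \hat H^\top \hat H \, B$, that is $A = \hat H^\top \hat H \, B$. Since $A = B^2$ and $B$ is invertible (being the square root of a positive definite matrix), multiplying on the right by $B^{-1}$ yields $\hat H^\top \hat H = B$. Substituting back into $G\hat H = \hat H B$,
\begin{equation*}
  \hat H \hat H^\top \hat H = \hat H (\hat H^\top \hat H) = \hat H B = G \hat H,
\end{equation*}
which is exactly the identity needed, so $\nabla J(\hat H) = 0$.

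I expect the delicate points to be bookkeeping rather than conceptual: correctly identifying the spectral factor appearing in $\nabla\pi$ with $A^{-1/2}$, and invoking positive definiteness of $A$ to guarantee that $B = A^{1/2}$ is well-defined and invertible so that the cancellation of $B$ is legitimate. The positivity-of-eigenvalues assumption inherited from \Cref{prop:gradient_spectral_norm} is exactly what licenses this final cancellation.
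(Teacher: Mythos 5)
Your proof is correct and follows essentially the same route as the paper's: compute $\nabla J(H) = HH^\top H - GH$, write the criticality condition as $\hat H = G\hat H\,(\hat H^\top G\hat H)^{-1/2}$, left-multiply by $\hat H^\top$ to deduce $\hat H^\top\hat H = (\hat H^\top G\hat H)^{1/2}$, and conclude $\hat H\hat H^\top\hat H = G\hat H$. The only difference is notational (you use the principal square root $A^{1/2}$ where the paper carries the explicit factor $U^\top S^{1/2}U$), and your explicit remark that positive definiteness of $\hat H^\top G\hat H$ licenses the cancellation is a point the paper leaves implicit.
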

The characterization of the critical points of $J$ is given in \citep{Wright-Ma-2022}.
We solve the dual problem \eqref{pbm:dual_kpca} using the L-BFGS optimization algorithm, which in practice we observe always avoids sub-optimal critical points.
More details on the optimization algorithm are given in \Cref{sec:experiments}.
\subsection{Exploiting the Dual Solution}
\label{sec:optimal_dual}
Assume that solving \Cref{pbm:dual_kpca} has produced an optimal dual solution that we denote $\hat{H}$.
In general, finding the corresponding directions in the feature space (primal variables) involves solving the optimization problem 
\begin{problem}
	\hat{W} \in \argmin_{W \in \mH^s} ~~ g(W) - \langle \hat{H}, \Gamma W \rangle.
\end{problem}
This optimization problem is non-trivial when we do not have more information about the solution $\hat{H}$.
However, when $\hat{H}$ is obtained by the SVD of $G$, there is a linear dependency between the directions in the feature space and the dual variables.
This is exemplified in the following proposition where we show that the SVD of $G$ allows to pick an optimal dual solution.
\begin{proposition}
	\label{prop:optimal_svd_dual}
	Let $G = U^\top \Sigma U$ be the SVD of $G$ with $U \in \R^{n \times n}$ being orthonormal and $\Sigma$ being diagonal.
	Let $H^{\text{svd}} = U_s^\top \sqrt{\Sigma}_s$ where $U_s \in \R^{s \times n}$ gathers the top-$s$ eigenvectors of $G$ and $\Sigma_s \in \R^{s \times s}$ the top-$s$ eigenvalues.
	Then $H^{\text{svd}}$ is a solution to \Cref{pbm:dual_kpca}.
\end{proposition}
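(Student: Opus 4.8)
The plan is to certify $H^{\text{svd}}$ as a global minimizer by evaluating the dual objective at it in closed form and matching that value with the optimal value of \Cref{pbm:dual_kpca} obtained through strong duality. Throughout, write $D(H) = \frac12\Tr(H^\top H) - \pi(H)$ for the dual objective, $\lambda_j(G)$ for the $j$-th largest eigenvalue of $G$, and $\Sigma_s = \diag(\lambda_1(G),\dots,\lambda_s(G))$.

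First I would pin down the optimal value. Instantiating \Cref{prop:general_dual} with $f=\frac12\norm{\cdot}_\fro^2$ (self-conjugate) and $g=\iota_{\mS_\mH^s}$, and using \Cref{prop:fl_gstar} to identify $g^\star(\Gamma^\sharp H)=\pi(H)$, strong duality yields
\[
\inf_{H\in\R^{n\times s}} D(H) = -\sup_{W\in\mS_\mH^s}\tfrac12\norm{\Gamma W}_\fro^2 .
\]
To evaluate the right-hand side I would write the directions as $w_j=\sum_i\alpha_{ij}\phi(x_i)$, so that $\Gamma W = G\alpha$ and the constraint $\mG(W)=I_s$ reads $\alpha^\top G\alpha = I_s$; the substitution $\beta=G^{1/2}\alpha$ (licit since $G$ is full rank) turns the problem into $\max_{\beta^\top\beta=I_s}\frac12\Tr(\beta^\top G\beta)$, whose value is $\frac12\sum_{j=1}^s\lambda_j(G)$ by the Ky Fan / Rayleigh–Ritz characterization. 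Hence $\inf_H D(H) = -\frac12\sum_{j=1}^s\lambda_j(G)$.

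The core computation is the direct evaluation of $D(H^{\text{svd}})$. Since the rows of $U_s$ are orthonormal eigenvectors of $G$, one has $U_sU_s^\top = I_s$ and $G U_s^\top = U_s^\top \Sigma_s$. With $H^{\text{svd}}=U_s^\top\sqrt{\Sigma}_s$ this gives $(H^{\text{svd}})^\top H^{\text{svd}} = \sqrt{\Sigma}_s (U_sU_s^\top)\sqrt{\Sigma}_s = \Sigma_s$ and $(H^{\text{svd}})^\top G\,H^{\text{svd}} = \sqrt{\Sigma}_s (U_s G U_s^\top)\sqrt{\Sigma}_s = \Sigma_s^2$. Therefore $\frac12\Tr\big((H^{\text{svd}})^\top H^{\text{svd}}\big)=\frac12\sum_{j=1}^s\lambda_j(G)$ and $\pi(H^{\text{svd}})=\Tr\sqrt{\Sigma_s^2}=\sum_{j=1}^s\lambda_j(G)$, whence $D(H^{\text{svd}}) = -\frac12\sum_{j=1}^s\lambda_j(G)$.

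Finally, since $D(H^{\text{svd}})$ equals the optimal value $\inf_H D(H)$ established above, $H^{\text{svd}}$ is a solution of \Cref{pbm:dual_kpca}. I expect the only delicate points to be bookkeeping the conventions (rows versus columns of $U_s$, and the decreasing ordering of the eigenvalues so that $\sqrt{\Sigma_s^2}=\Sigma_s$, which is harmless because $G\succeq 0$) and the justification of strong duality — but the latter is already supplied by \Cref{prop:general_dual}. The genuine obstacle is thus confirming the primal optimum value, as this is what makes the value-matching a valid optimality certificate; everything else is routine linear algebra.
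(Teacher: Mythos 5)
Your proposal is correct and follows essentially the same route as the paper: evaluate the dual objective at $H^{\text{svd}}$ to get $-\tfrac{1}{2}\Tr(\Sigma_s)$, identify this with minus the optimal primal value, and invoke strong duality from \Cref{prop:general_dual}. The only difference is that you spell out the Ky Fan computation of the primal optimum, which the paper merely asserts.
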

In our case, we do not have access to such an optimal solution and must find a way to compute the projections with the only knowledge of $\hat{H}$.
It turns out that recovering the projection on the principal components is possible using the kernel trick, as proposed in the following.
\begin{proposition}
	\label{proposition:projections}
	Let $x \in \mX$ and $G_x = [k(x, x_i)]_{i=1}^n \in \R^{n}$.
	Let $\hat{W} \in \mH^s$ be a solution to \Cref{pbm:primal_kpca} and $\hat{H} \in \R^{n \times s}$ be a solution to \Cref{pbm:dual_kpca}.
	The projections of $\phi(x)$ onto the principal components are given by 
	\begin{equation}
		\label{eq:projections}
		[\langle \phi(x), \hat{w}_j \rangle]_{j=1}^s = G_x ^\top \hat{H} U^\top \mathrm{diag}\left( \lambda(\hat{H}^\top G \hat{H})\right)^{-\frac{1}{2}} U,
	\end{equation}
	where $U$ is obtained from the SVD of $\hat{H}^\top G \hat{H}$ as in \Cref{prop:gradient_spectral_norm}.
\end{proposition}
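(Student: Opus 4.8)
The plan is to show that the new-point projections depend on the primal solution $\hat W$ only through the coefficients of its expansion in the training features, and that these coefficients are pinned down by the dual solution $\hat H$ via the primal--dual optimality relation. First I would invoke the representer structure already used in the SVD derivation: any solution $\hat W = (\hat w_1,\dots,\hat w_s)$ to \Cref{pbm:primal_kpca} has each direction in the span of the features, so there is $\alpha \in \R^{n\times s}$ with $\hat w_j = \sum_{i=1}^n \alpha_{ij}\,\phi(x_i)$. Then for a test point $x$,
\begin{equation*}
\scal{\phi(x)}{\hat w_j} = \sum_{i=1}^n \alpha_{ij}\,\scal{\phi(x)}{\phi(x_i)} = \sum_{i=1}^n \alpha_{ij}\, k(x,x_i),
\end{equation*}
so that the vector of projections equals $G_x^\top \alpha$. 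The whole proof therefore reduces to identifying $\alpha$ in terms of $\hat H$.

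To do so I would use the primal-recovery characterization stated just above \Cref{prop:optimal_svd_dual}, namely that the primal solution associated with $\hat H$ satisfies $\hat W \in \argmin_{W\in\mH^s} g(W) - \scal{\hat H}{\Gamma W}$. With $g = \iota_{\mS_\mH^s}$ this is the maximization of the linear form $\scal{\Gamma^\sharp \hat H}{W}$ over the Stiefel manifold. A short computation of the adjoint gives $[\Gamma^\sharp \hat H]_j = \sum_{i=1}^n \hat H_{ij}\,\phi(x_i)$, and the associated Gram matrix is exactly $\mG(\Gamma^\sharp \hat H) = \hat H^\top G \hat H$, already matching $g^\star(\Gamma^\sharp \hat H)=\Tr\sqrt{\hat H^\top G\hat H}$ from \Cref{prop:fl_gstar}.

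The heart of the argument is then the solution of this linear maximization over orthonormal frames, an orthogonal Procrustes problem whose maximizer is the polar factor. Writing $A = \Gamma^\sharp \hat H$ and using the polar decomposition $A = Q (A^\sharp A)^{1/2}$ with $A^\sharp A = \hat H^\top G \hat H$ and $Q^\sharp Q = I_s$, one shows via $\Tr\big((A^\sharp A)^{1/2} Q^\sharp W\big)\le \Tr\sqrt{A^\sharp A}$ (using $\norm{Q^\sharp W}\le 1$) that the maximum $\Tr\sqrt{\hat H^\top G \hat H}$ is attained at $\hat W = A(\hat H^\top G\hat H)^{-1/2}$. Reading off the coefficients yields $\alpha = \hat H (\hat H^\top G \hat H)^{-1/2}$; substituting the eigendecomposition $\hat H^\top G\hat H = U^\top \mathrm{diag}(\lambda(\hat H^\top G\hat H))\,U$ turns $(\hat H^\top G\hat H)^{-1/2}$ into $U^\top \mathrm{diag}(\lambda(\hat H^\top G\hat H))^{-1/2} U$ and, combined with $G_x^\top\alpha$, gives exactly \Cref{eq:projections}.

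The main obstacle I anticipate is making the Procrustes step rigorous, in particular the well-definedness of $(\hat H^\top G\hat H)^{-1/2}$: this requires $\hat H^\top G\hat H$ to be positive definite, which I would justify from the standing full-rank assumption on $G$ together with $\hat H$ having full column rank at a solution (the same positivity hypothesis underlying \Cref{prop:gradient_spectral_norm}). A secondary point is that $\mH$ may be infinite dimensional, but since every $[\Gamma^\sharp \hat H]_j$ lives in the finite-dimensional span of $\{\phi(x_i)\}_{i=1}^n$, the polar-factor argument can be carried out in that finite-dimensional subspace without difficulty.
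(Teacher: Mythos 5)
Your proposal is correct and follows essentially the same route as the paper: both recover the primal solution from $\hat W \in \argmax_{\mG(W)=I_s}\langle W,\Gamma^\sharp\hat H\rangle$, solve this as an orthogonal Procrustes problem, and arrive at the coefficient matrix $\hat H(\hat H^\top G\hat H)^{-1/2}$, which combined with the kernel trick gives \Cref{eq:projections}. The only cosmetic difference is that the paper performs the change of variables $B=G^{1/2}A$ and uses the SVD of $G^{1/2}\hat H$, whereas you work with the polar decomposition of $\Gamma^\sharp\hat H$ directly (and, usefully, you make explicit the positive-definiteness of $\hat H^\top G\hat H$ that both arguments implicitly need).
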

Note that the expression in \Cref{eq:projections} does not directly involves the directions in the feature space encoded in $\hat{W}$.

\section{Beyond Variance Maximization}
\label{sec:flexible}
In this section, we propose to modify the variance objective used in the original KPCA problem to promote desirable properties such as sparsity or robustness in the dual variable $H$.
The idea is based on using objectives obtained from infimal convolution with the squared norm, known as \emph{Moreau envelopes} \citep{moreau1965proximite}.
We focus on variance-like objectives of the form 
\begin{equation*}
    f = \frac{1}{2} \norm{\cdot}^2_\fro \infconv \Psi,
\end{equation*}
where $\Psi \colon \R^{n \times s} \to \bar{\R}$ is a well-chosen function enforcing desirable properties.
Compatibility between the Fenchel-Legendre transform and the infimal convolution operator then allows to write the dual to \Cref{pbm:primal_dc} as
\begin{problem}
    \label{pbm:dual_dc_infconv}
    \inf_{H \in \R^{n \times s}} ~ \frac{1}{2} \norm{H}^2_\fro + \Psi^\star(H) - \pi(H).
\end{problem}
The section is organized as follows: we begin in \Cref{sec:huber_epsilon} by expliciting some choices of $\Psi$ that give rise to the Huber and $\epsilon$-insensitive objectives (known to respectively promote robustness and sparsity) before exploiting their Moreau envelope structure to design dedicated optimization schemes in \Cref{sec:dca}.
\subsection{Huber and $\epsilon$-insensitive Objectives}
\label{sec:huber_epsilon}
In what follows, let $\norm{\cdot}$ be a norm on $\R^{n \times s}$ and $\norm{\cdot}_\star$ be its dual norm.
Given $t \geq 0$, the balls of radius $t$ for these norms are respectively denoted as $\cB_t$ and $\cB^\star_t$.
\begin{definition}[Huber]
    Let $\kappa > 0$. The Huber objective with parameter $\kappa$ is defined as
    \begin{equation*}
        H_\kappa := \frac{1}{2} \norm{\cdot}^2_\fro \infconv \kappa \norm{\cdot}.
    \end{equation*}
\end{definition}
The Huber objective can be understood as the Moreau envelope of $\kappa \norm{\cdot}$ with parameter $1$.
This case corresponds to $\Psi = \kappa \norm{\cdot}$ and $\Psi^\star = \iota_{\cB^\star_\kappa}$.
The additional term in \Cref{pbm:dual_dc_infconv} constrains the dual variable $H$ to pertain to the ball of radius $\kappa$ for the norm $\norm{\cdot}_\star$, thus inducing robustess.
While the choice $\norm{\cdot} = \norm{\cdot}_\fro$ recovers the classical version of the Huber objective, it is to be noted that varying choices of norms allow to capture different notions of robustness \citep{lambert2022}.
\begin{definition}[$\epsilon$-insensitive]
    Let $\epsilon > 0$. The $\epsilon$-insensitive objective with parameter $\epsilon$ is defined as
    \begin{equation*}
        \ell_\epsilon := \frac{1}{2} \norm{\cdot}^2_\fro \infconv \iota_{\cB_\epsilon}.
    \end{equation*}
\end{definition}
The $\epsilon$-insensitive objective is the Moreau envelope of some indicator function of a ball, corresponding to $\Psi = \iota_{\cB_\epsilon}$ and $\Psi^\star = \epsilon \norm{\cdot}_\star$.
Depending on the choice of the norm, the additional term in \Cref{pbm:dual_dc_infconv} can behave like a Lasso or group Lasso penalty, inducing sparsity in the iterates.
\subsection{Solving with DC Algorithm}
\label{sec:dca}
We propose to solve \Cref{pbm:dual_dc_infconv} using the well-known difference of convex functions algorithm (DCA) \citep{tao1997}.
The point of DCA is to search for a critical point of the problem, that is some $H \in \R^{n \times s}$ such that 
\begin{equation*}
    \partial \pi(H) \cap \partial (\frac{1}{2}\norm{\cdot}_\fro^2 + \Psi^\star)(H) \neq \emptyset.
\end{equation*}

\begin{table}[t]
	\centering
	\caption{Proximal operators for Huber and $\epsilon$-insensitive objectives.}
	\label{tab:proximal_psi}
	\begin{tabular}{lccc}
		\toprule
		Objective & $\Psi$ & $\Psi^\star$ & $\prox_{\Psi^\star}(Y)$ \\
		\midrule
		$H_\kappa$ & $\kappa \norm{\cdot} $ & $\iota_{\cB^{\star}_\kappa}$  & $\Proj_{\cB^{\star}_\kappa}(Y)$  \\
		$\ell_\epsilon$ & $\iota_{\cB_\epsilon}$ & $ \epsilon \norm{\cdot}_\star $ & $Y - \Proj_{\cB_\epsilon}(Y) $ \\
		\bottomrule
	\end{tabular}
\end{table}

The search is performed sequentially in the subgradient of $\pi$, and in the subgradient of the Fenchel-Legendre transform of $\frac{1}{2}\norm{\cdot}_\fro^2 + \Psi^\star$, which is $f$.
As $f$ was chosen to be a Moreau envelope, its gradient is always defined as for all $Y \in \R^{n \times s}$,
\begin{equation*}
    \nabla \left (\frac{1}{2} \norm{\cdot}^2_\fro \infconv \Psi \right )(Y) = Y - \prox_\Psi(Y).
\end{equation*}
According to Moreau decomposition \citep{moreau1962fonctions}, it moreover holds that for all $Y \in \R^{n \times s}$,
\begin{equation*}
    Y - \prox_\Psi(Y) = \prox_{\Psi^\star}(Y).
\end{equation*}
Thus the DCA algorithm can be applied as long as the computation of $\prox_{\Psi^\star}$ is possible.
The correspondence between the choice of objective and the resulting proximal operator is gathered in \Cref{tab:proximal_psi}.
In particular, one can note that tractability of the DCA algorithm requires that the projection on the balls for the dual norm $\norm{\cdot}_\star$ is possible in the Huber case, whereas for $\epsilon$-insensitive objectives what matters is being able to project on the balls for the norm $\norm{\cdot}$.
This requirement drives the choice of the norm used in the definition of the objectives, as projecting on balls can be obtained in closed-form when the norm is a $2$-norm or an $\infty$-norm.
The DCA is summarized in \Cref{alg:dca_moreau}.
\LinesNumberedHidden{
\begin{algorithm}[t]
\SetKwInOut{Input}{input}
\SetKwInOut{Init}{init}
\SetKwInOut{Parameter}{Param}
\SetInd{0.5em}{0.5em}
\caption{DCA for Moreau envelope objectives 
\label{alg:dca_moreau}}
\Input{~Gram matrix $G$}
\Init{$H^{(0)} \in \mathbb{R}^{n \times s}, Y = 0$}
\For{epoch $t$ from $0$ to $T-1$}{
    \tcp{alternating gradient steps}
    $Y = \nabla \pi (H^{(t)})$ from \Cref{eq:gradient_spectral_norm}

    $H^{(t+1)} = \prox_{\psi^\star}(Y)$
    }
\Return{$H^{(T)}$}
\end{algorithm}}

\section{Numerical Experiments}
\label{sec:experiments}

Through numerical evaluations, we show the efficiency and flexibility of the proposed KPCA dualization on a diverse collection of 
dimensionality reduction tasks. 

First, we evaluate the efficiency of the resulting LBFGS-based optimization algorithm for KPCA on multiple datasets, with comparisons to other standard KPCA solvers. 
We apply the proposed algorithm to problems of different sizes, and we also study the effect on performance of the decay of the eigenspectrum of $G$.

Then, we quantify the robustness of the Huber losses on data contaminated with outliers depending on the noise level, and we study the influence of the loss parameters. 
Regarding the $\epsilon$-insensitive losses, we study the accuracy-sparsity tradeoff w.r.t. the $\epsilon$ parameter and the number of components $s$.

Experiments are implemented in Python 3.10 on a machine with a 3.7GHz Intel i7-8700K processor and 64GB RAM.
The  code is available  at \url{https://github.com/taralloc/dc-kpca}.

\begin{table*}[!htbp]
	\caption{\textbf{KPCA Training Time}. 	
	Runtime for multiple KPCA problems with \underline{higher} tolerance. 
	Speedup factor w.r.t. RSVD. 
	}
	\label{tab:speedup:high}
	\centering
	\begin{tabular}{lcccccc}
		\toprule
		\multirow{2}{*}{Task} & \multirow{2}{*}{$n$} & \multicolumn{4}{c}{Time (s) for $\delta=10^{-2}$}  & \multicolumn{1}{c}{Speedup} \\\cmidrule(lr){3-6}
		& & SVD & Lanczos & RSVD & Ours & Factor\\	
		\midrule
		Synth 1     & 7000  & 96.73  &  0.85  & 1.97   & \textbf{0.53}  & 3.72 \\ 	
		Protein     & 14895 & 868.64 &  3.46  & 6.70   & \textbf{1.07}  & 6.25 \\ 	
		RCV1        & 20242 & -      &  6.04  & 12.50  & \textbf{2.12}  & 5.90 \\ 
		CIFAR-10    & 60000 & -      &  48.10 & 123.89 & \textbf{13.51} & 9.17 \\ 	
		\bottomrule
	\end{tabular}
\end{table*}

\begin{table*}[t]
	\caption{\textbf{KPCA Training Time}. 
	Runtime for multiple KPCA problems with \underline{lower} tolerance.
	}
	\label{tab:speedup:low}
	\centering
	\begin{tabular}{lcccccc}
		\toprule
		\multirow{2}{*}{Task} & \multirow{2}{*}{$n$} & \multicolumn{4}{c}{Time (s) for $\delta = 10^{-4}$}  & \multicolumn{1}{c}{Speedup} \\\cmidrule(lr){3-6}
		& & SVD & Lanczos & RSVD & Ours & Factor\\	
		\midrule
		Synth 1     & 7000  & 96.73  &   4.78  &  1.97  & \textbf{0.56}  & 3.52 \\ 	
		Protein     & 14895 & 868.64 &   3.51  &  6.95  & \textbf{1.07}  & 6.59 \\ 	
		RCV1        & 20242 & -      &  19.72  & 12.75  & \textbf{3.78}  & 3.38 \\ 
		CIFAR-10    & 60000 & -      &  48.15  & 122.58 & \textbf{29.26} & 4.19 \\ 	
		\bottomrule
	\end{tabular}
\end{table*}

\paragraph{Datasets.}%
We evaluate our approach on synthetic and real-world datasets. 
Synth 1 ($n=7000, d=10000$) is a high-dimensional synthetic dataset where samples are drawn randomly from a multivariate normal distribution with zero mean and fixed covariance matrix. 
For real-world data, we consider datasets from LIBSVM \citep{libsvm}, UCI \citep{uci}, and common deep learning benchmarks: Iris ($n=150, d=4$), the bioinformatics dataset Protein ($n=14895, d=357$), the text categorization dataset RCV1 ($n=20242, d=47236$), and the outputs of the second to last layer of a ResNet18 \citep{he2016} trained on the computer vision dataset CIFAR-10 ($n=60000, d=512$).

\paragraph{Experimental setups.}%
Regarding optimization for KPCA with the square loss, we employ the LBFGS algorithm with backtracking linesearch using the strong Wolfe conditions with initialization from the standard normal distribution. 
For the Moreau envelopes, i.e. Huber and $\epsilon$-insensitive losses, we employ the DC algorithm; for faster convergence second order optimization algorithms for this problem with composite smooth + non-smooth structure can also be employed, e.g. \citep{stella2017}.
For the square loss, the accuracy of a dual iterate $H_k$ from our solver at each iteration $k$ is chosen as the relative difference $\eta=|d(H_k)-d_\text{opt}| / d_\text{opt}$ between the dual cost $d(H_k)=\tfrac{1}{2}\trace{H_k^\top H_k}-\trace{\sqrt{H_k^\top G H_k}}$ at iteration $k$ and the optimal dual cost $d_\text{opt}=-\tfrac{1}{2}\sum_{i=1}^s \lambda_i$, with $\lambda_i$ being the $i$-th largest eigenvalue of $G$. In fact, the optimal primal cost is the highest variance in $s$ components, i.e., $\tfrac{1}{2}\sum_{i=1}^s \lambda_i$, and strong duality holds in our dualization.
For all used solvers, we use the same stopping criterion based on achieving a target tolerance. 
For the other convoluted losses, we stop the DCA when the absolute variation of the loss is less than machine precision with at most 1000 iterations.

\subsection{More Efficient KPCA}

\begin{figure}[t]
	\centering
	\includegraphics{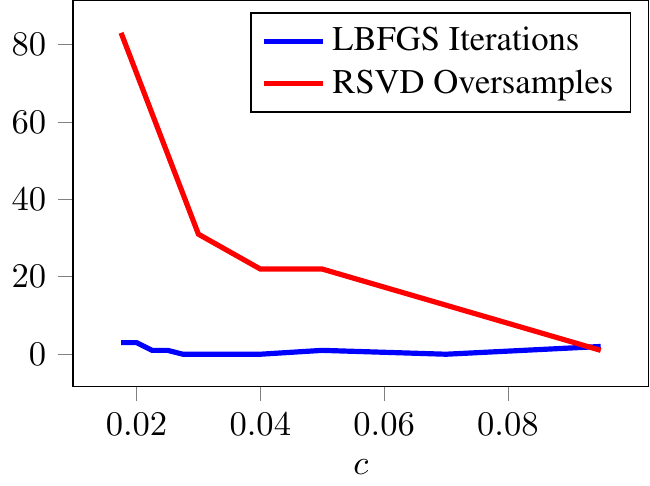}
	\caption{\textbf{Varying eigenspectrum.}
	Additional computational burden when the spectrum of $G$ changes (larger $c$ corresponds to spectra with faster decay).
	Blue: our LBGS-based algorithm, red: randomized SVD.
	}
	\label{fig:spectrum}
\end{figure}

This subsection demonstrates the performance of the proposed LBFGS-based algorithm to solve the KPCA problem \eqref{pbm:primal_kpca}. 
We compare our method with three common KPCA solvers: full SVD (SVD), Implicitly Restarted Lanczos Method \citep{lehoucq1998}, and randomized SVD (RSVD) \citep{halko2011}. 
The Lanczos solver finds the first $s$ eigenvalues and corresponding eigenvectors of the symmetric matrix $G$, while randomized SVD finds the truncated singular value decomposition of $G$ by random projections.
These solvers rely on different stopping criteria. To make a fair comparison, note that $d_\text{opt}=d(U\sqrt{S})$, with $G=USV^\top$ being the SVD of $G$. Therefore, the accuracy of an approximate solution $\hat{U}, \hat{S}, \hat{V}$ is measured by the following relative dual cost residual:
\begin{equation}
	\eta = \abs{d(\hat{U}\sqrt{\hat{S}}) - d_\text{opt}} / d_\text{opt},
\end{equation}
where for RSVD $\hat{S}$ is the diagonal matrix of the largest $s$ singular values of $G$ and corresponding computed singular vectors $\hat{U}, \hat{V}$, while for the eigendecomposition solver $\hat{S}$ is the diagonal matrix of the largest $s$ eigenvalues of $G$ and corresponding eigenvectors $\hat{U}=\hat{V}$ found by the Lanczos solver.
Full SVD is run to machine precision for comparison.
For a given tolerance $\delta$, we stop training when $\eta < \delta$. 
In particular, for RSVD, the number of required oversamples is found by increasing the number of oversamples until the target tolerance is reached.

The kernel is chosen to be the Laplace kernel $k(z, y)=\exp\left(-\norm{z-y}_2/(2\sigma^2)\right)$ with $\sigma=0.1\sqrt{d\sigma_x}$ and $\sigma_x$ the variance of the training data. 
For the KPCA dimensionality reduction task, it is common to assume that the given high-dimensional data can be expressed over a small number of principal components; therefore, in these experiments, we use $s=20$. 
The experiments are averaged over 5 runs.

Table \ref{tab:speedup:high} and \ref{tab:speedup:low} show the training times on different KPCA tasks for multiple tolerance levels $\delta=10^{-2}, 10^{-4}$; lowest training times are in bold. The time to compute the full SVD to machine precision is given for reference in the SVD column, where ``-" indicates that SVD took longer than 30 minutes.
The speedup factor is $t^\text{(RSVD)}/t^\text{(LBFGS)}$, where $t^\text{(RSVD)}, t^\text{(LBFGS)}$ is the training time using the RSVD solver and our LBFGS-based solver, respectively.
For tolerance $\delta=10^{-2}$, our solver is faster than all other KPCA solvers and at least 3 times faster than RSVD. 
For the lowest tolerance $\delta=10^{-4}$, our solver is the fastest with generally smaller speedup factor. 
Since our solver is second order, it can reach higher accuracy quickly, with subsequent iterations giving relatively lower improvement, which explains why the speedup is more apparent at higher tolerances.
All solvers can achieve high accuracy; 
however, in some tasks (e.g., CIFAR-10 and Protein) the Lanczos method cannot take significant advantage of higher tolerance requirements.

A solver's performance depends on the properties of the given data and kernel. 
For instance, it is known that RSVD requires more oversamples for matrices whose eigenspectrum decays slowly \citep{halko2011}, which is often the case in real-world problems. 
On the other hand, our LBGS-based solver does not suffer from such issue. 
To further illustrate this point, we construct $G$ from random data $X$ as $G = 0.01 (X+X^\top) + U D U^\top$, where $U$ is any orthogonal matrix and $D$ is a diagonal matrix s.t. $D_{ii} = \exp(-ci)$.
Varying $c$ controls how quickly the spectrum of $G$ decays, with lower $c$ corresponding to slower decays. 
In Fig. \ref{fig:spectrum}, we vary $c$ and show the additional computational cost for RSVD and for our solver when the spectrum of $G$ changes.
For RSVD, the red line shows the additional required oversamples needed to achieve tolerance $\delta=10^{-4}$ w.r.t. the experiment with lowest number of oversamples. 
The blue line shows the additional LBFGS iterations required by our solver to achieve the same tolerance. 
While RSVD requires significantly more oversamples to reach a fixed accuracy as $c$ decreases, our solver takes a similar number of iterations, showing that our solver is mostly unaffected by this type of change in the eigenspectrum of $G$.

The influence of $s$ on training time is studied in \Cref{fig:r} on random data with $n=15000$: higher $s$ leads to longer training times for both LBFGS and RSVD, with LBFGS maintaining performance advantage. Very high $s$ impacts LBFGS's training time as it needs to compute the SVD of a $s \times s$ matrix at every iteration.

\begin{figure}[!htbp]
	\centering
	\includegraphics{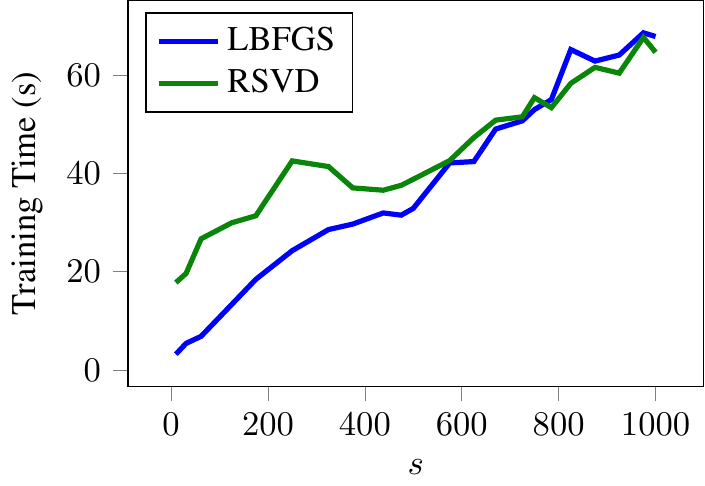}
	\caption{\textbf{KPCA Training Time}. Effect of $s$.}
	\label{fig:r}
\end{figure}
	
\begin{table}[!htbp]
	\centering
	\caption{\textbf{Robustness}. MSE on contaminated Iris dataset.}
	\label{tab:huber}
	\begin{tabular}{lccc}
		\toprule
		$\tau$ & $\tfrac{1}{2}\norm{\cdot}^2$ & $H_\kappa^2$ & $H_\kappa^1$ \\
		\midrule
		10 & 7.591059 & 6.833484 & 7.381284 \\
		25 & 7.910846 & 7.182663 & 7.687518 \\
		50 & 8.691805 & 8.045477 & 8.430957 \\
		75 & 9.782740 & 9.259353 & 9.465706 \\
		100 & 11.183650 & 10.824293 & 10.791766 \\
		\bottomrule
	\end{tabular}
\end{table}

\subsection{Huber Losses} \label{sec:experiments:huber}

We now present numerical experiments to illustrate the flexibility of the KPCA dualization framework applied to the Huber loss. 
We consider the Huber losses associated to different norms: $H_\kappa^1$ is the Moreau envelope of $H \mapsto \kappa \norm{H}_1$ and $H_\kappa^2$ is the Moreau envelope of $H \mapsto \max_{i \in [n]} \norm{h_i}_2$.
Both cases are optimized with the same DC algorithm.

\begin{figure}
	\centering
	\includegraphics{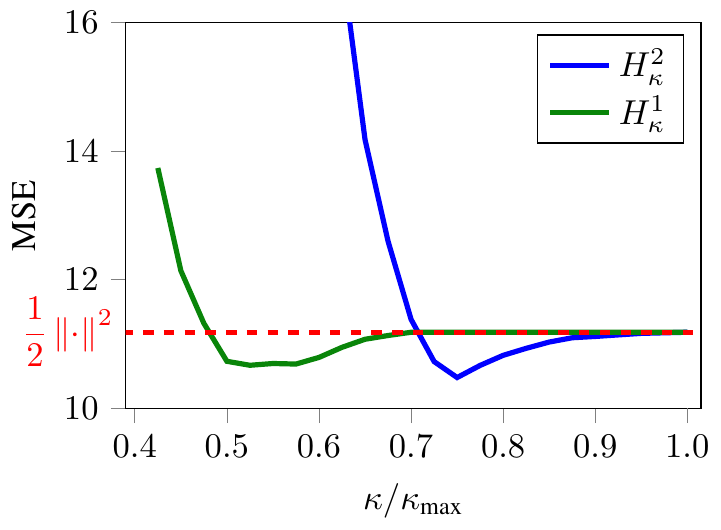}
	\caption{\textbf{Robustness}. Effect of $\kappa$ for the loss $H_\kappa^2$ and $H_\kappa^1$. %
	}
	\label{fig:msekappa}
\end{figure}

\begin{figure*}[t]
	\centering
	\begin{subfigure}[b]{0.48\textwidth}
		\raggedleft
		\includegraphics{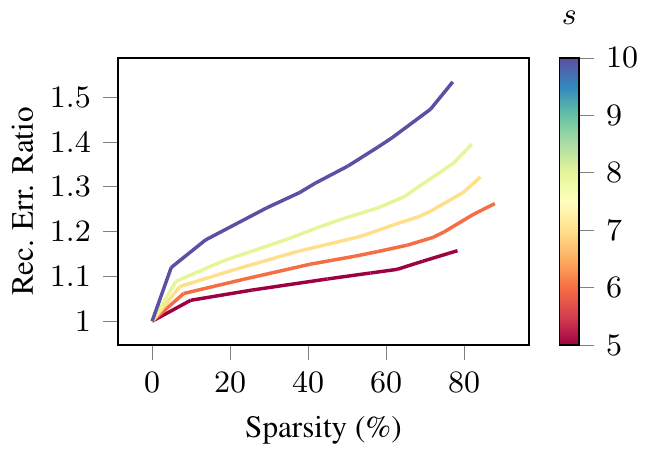}
	\end{subfigure}%
	\hspace{5pt}
	\begin{subfigure}[b]{0.48\textwidth}
		\raggedright
		\includegraphics{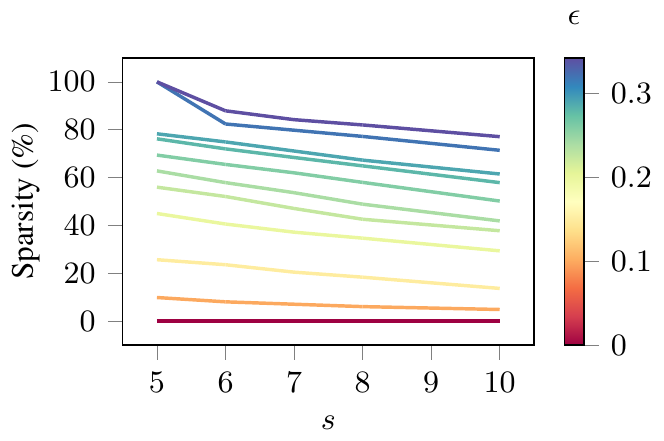}
	\end{subfigure}
	\caption{\textbf{Sparsity}. Reconstruction error for the $\ell_\epsilon^2$ loss for multiple $\epsilon$ and $s$.}
    \label{fig:sparsity:block}
\end{figure*}

\begin{figure*}[t]
	\centering
	\begin{subfigure}[b]{0.48\textwidth}
		\raggedleft
		\includegraphics{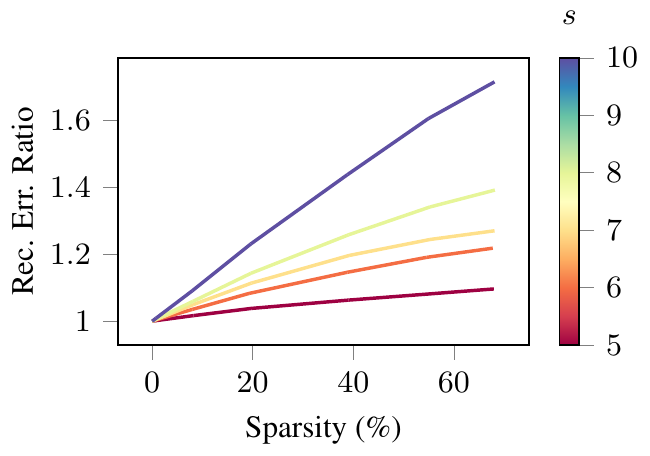}
		\label{fig:sparsity:coordinate:1}
	\end{subfigure}%
	\hspace{5pt}
	\begin{subfigure}[b]{0.48\textwidth}
		\raggedright
		\includegraphics{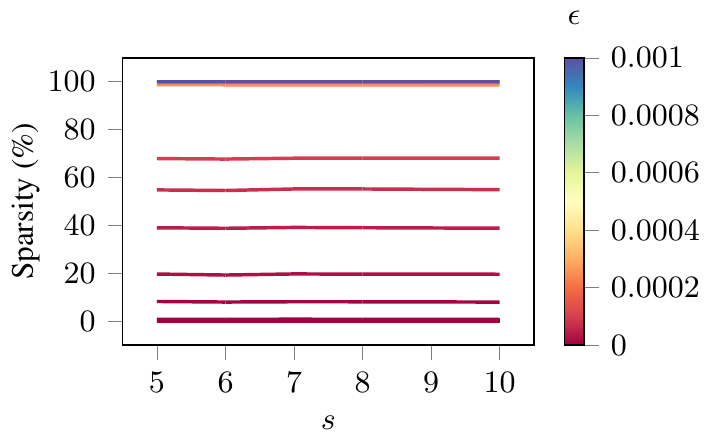}
		\label{fig:sparsity:coordinate:2}
	\end{subfigure}
	\caption{\textbf{Sparsity}. Reconstruction error for the $\ell_\epsilon^\infty$ loss for multiple $\epsilon$ and $s$.}
	\label{fig:sparsity:coordinate}
\end{figure*}

We first investigate the robustness induced by the Huber losses on the Iris dataset. 
We contaminate the data in the following way: to corrupt data $\{x_i\}_{i=1}^n$, we first draw a set $I \subset \{0, ..., n\}$ s.t. $|I| = \lfloor \omega n \rfloor$, with $\omega \in [0,1]$ being the proportion of corrupted samples. 
Then, we introduce outliers using multiplicative Gaussian noise with zero mean and $\tau$ standard deviation: for $i \in I$, $x_i$ is replaced by $b_i x_i$, where $b_i$ is drawn from $\mathcal{N}(0,\,\tau^2)$. 
We evaluate performance by reconstruction error (MSE) in input space of non-contaminated test samples: higher MSE means that the learned subspace is influenced by the outliers, while lower MSE corresponds to more robust models. 
We compute pre-images using the technique of \citep{bakir2004}.
As discussed in Section \ref{sec:optimization}, as $\kappa$ grows, the constraint on the dual variables becomes void and we recover the square loss KPCA problem. Therefore, we set $\kappa=0.6\kappa_\text{max},\,0.8\kappa_\text{max}$ for $H_\kappa^1,\,H_\kappa^2$, respectively.

The resulting test MSE values are given in Table \ref{tab:huber} for $\omega = 8\%$ on a 20\% test split. 
We employ a Gaussian kernel $k(z, y)=\exp\left(-\norm{z-y}_2^2/2\right)$ for all models. 
The Huber losses $H_\kappa^1,\,H_\kappa^2$ are more robust to outliers for all noise levels. 
Particularly, $H_\kappa^1$ show greater robustness at the highest noise level, while lower noise levels favor $H_\kappa^2$.
Overall, these results show that the learned principal components are more robust to the introduced outliers.

The influence of $\kappa$ is studied in Fig. \ref{fig:msekappa} with $\tau=100$. The Huber losses (blue and green lines) can be distinctively more robust to outliers than the square loss (red dashed line). When $\kappa$ becomes closer to $\kappa_\text{max}$, the robustness effect is void and the MSE converges to the one from the standard KPCA. With small $\kappa$, the constraint on the dual variables is too strict to learn meaningful principal components, as the size of the projection ball becomes insufficient. Accordingly, the MSE grows quickly in the small $\kappa$ region. A balanced choice of $\kappa$ results in lower MSE, i.e., reduces the influence of the outliers and therefore learns more robust principal components.

\subsection{$\epsilon$-insensitive Losses} \label{sec:experiments:epsilon}

The $\epsilon$-insensitive losses induce sparsity on $H$.
The choice of the ball used to define the Moreau envelope determines the sparsity type through the dual norm and $\epsilon$ affects the sparsity level, where $\epsilon=0$ recovers the square loss case.
In particular, for the dual norms $\norm{H}_\star = \norm{H}_1$ and $\norm{H}_\star = \sum_{i=1}^n \norm{h_i}_2$, we obtain respectively the losses $\ell_\epsilon^\infty,\,\ell_\epsilon^2$. The former promotes unstructured sparsity as the associated proximal step involves coordinate-wise soft-thresholding, the latter promotes block sparsity as the associated proximal step involves block soft-thresholding. The use of convoluted losses makes it possible to cast both the robustness and sparsity losses in the same duality framework. Accordingly, as for the Huber loss family, we address this optimization problem through the DC algorithm.

The study of the role of $\epsilon$ and $s$ for the $\ell_\epsilon^2$ loss is shown in Fig. \ref{fig:sparsity:block}, where we consider a random data matrix $X$ with $n=1000,\,d=20$ and the kernel is chosen to be Gaussian.
The reconstruction error ratio is the ratio between the reconstruction error using the components learned with the $\ell_\epsilon^2$ loss and the reconstruction error from the square loss. 
Here, sparsity is in terms of percentage of zero rows in $H$. 
In fact, typically the dual solution from KPCA is dense and all training points, corresponding to the rows of $H$, contribute. 
Block sparsity therefore induces a representation in a fraction of the training points.
We can see that setting $\epsilon=0$ recovers the square loss case. 
As $\epsilon$ grows, $H$ becomes sparser. 
We also expect that, as $\epsilon$ grows, the performance declines in terms of reconstruction error, demonstrating a trade-off between sparsity and accuracy of the learned representation.
Such compromise depends on the requirements of specific applications.
In the example of Fig. \ref{fig:sparsity:block}, with $s=5$ principal components, one can obtain 40\% sparsity with just a 10\% increase in reconstruction error.
Increasing $s$ with fixed $\epsilon$ usually leads to a slight decrease in sparsity, meaning that a higher $\epsilon$ is needed to achieve a similar sparsity level.
The $\ell_\epsilon^2$  performance relative to the square loss declines with higher $s$ for a fixed sparsity level. Similar conclusions can be drawn for the $\ell_\epsilon^\infty$ loss from Fig. \ref{fig:sparsity:coordinate}, where sparsity is in terms of zero entries of $H$.

\section{Conclusion}
\label{sec:conclusion}

This work presents a duality framework for the kernel PCA problem seen as a difference of convex functions.
The generalized family of objectives with Moreau envelopes structure allows to extend the variance maximization problem to a wider choice of objectives, inducing robust and sparse estimators.
The resulting gradient-based algorithm for standard KPCA shows important speedups in training time compared to the SVD solvers.
Future work could focus on convergence properties of the DC algorithm towards optimal critical points.
\section*{Acknowledgements}
\label{sec:ack}
This work is jointly supported by ERC Advanced Grant E-DUALITY (787960), iBOF project Tensor Tools for Taming the Curse (3E221427), Research Council KU Leuven: Optimization
framework for deep kernel machines C14/18/068, KU Leuven Grant CoE PFV/10/002, and Grant  FWO G0A4917N, EU H2020 ICT-48 Network TAILOR (Foundations of Trustworthy AI - Integrating Reasoning, Learning and Optimization), and the Flemish Government (AI Research Program), and Leuven.AI Institute. This work was also supported by the Research Foundation Flanders (FWO) research projects G086518N, G086318N, and G0A0920N; Fonds de la Recherche Scientifique — FNRS and the Fonds Wetenschappelijk Onderzoek — Vlaanderen under EOS Project No. 30468160 (SeLMA). 

\bibliography{references}
\bibliographystyle{icml2023}
\newpage
\onecolumn
\appendix
\section{Proofs}
\subsection{Proof of \texorpdfstring{\Cref{prop:general_dual}}{Proposition 3.1}}
\begin{proof}
    Since $f$ is proper closed convex, it holds that $f=f^{**}$, so equivalently we can write \Cref{pbm:primal_dc} as
    \begin{align}
        \inf_{W\in\cU} g(W) - \sup_{V\in\mathcal{K}} \, \{ \langle V, \Gamma W \rangle - f^*(V) \}
        \label{eq:dual:start}
        &=
        \inf_{W\in\cU} g(W) + \inf_{V\in\mathcal{K}} \, \{ f^*(V) - \langle V, \Gamma W \rangle \}
        \\
        &=
        \inf_{W\in\cU,V \in \cK} g(W) + f^*(V) - \langle V, \Gamma W \rangle
        \\
        &=
        \inf_{V\in\mathcal{K}} f^*(V) - \sup_{W\in\cU} \, \{ \langle \Gamma^\sharp V, W \rangle - g(W) \}
        \\
        &=
        \inf_{V\in\mathcal{K}} f^*(V) - g^*(\Gamma^\sharp V).
        \label{eq:dual:final}
    \end{align}
\end{proof}
\subsection{Proof of \texorpdfstring{\Cref{prop:fl_gstar}}{Proposition 3.2}}
We begin by expliciting a formula for $\Gamma^\sharp$.
\begin{lemma}
    Let $H \in \R^{n \times s}$. Then 
    \begin{equation*}
        \Gamma ^\sharp H = \left [\sum_{i=1}^n h_{ij} \phi(x_i) \right ]_{j=1}^s \in \mH^s.
    \end{equation*}
\end{lemma}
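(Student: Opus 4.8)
The plan is to verify the formula directly from the defining relation of the adjoint: $\Gamma^\sharp H$ is the unique element of $\mH^s$ such that $\langle \Gamma W, H \rangle_{\reals^{n \times s}} = \langle W, \Gamma^\sharp H \rangle_{\mH^s}$ for every $W \in \mH^s$. The only structural input needed is that $\reals^{n \times s}$ carries the Frobenius inner product while $\mH^s$ carries the product inner product $\langle V, W \rangle_{\mH^s} = \sum_{j=1}^s \langle v_j, w_j \rangle$. Once these two pairings are correctly identified, the whole argument collapses to expanding the left-hand side and matching it against the right-hand side.

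First I would expand the Frobenius pairing. Writing $W = (w_1, \ldots, w_s)$ and inserting the definition $[\Gamma W]_{ij} = \langle \phi(x_i), w_j \rangle$ from \Cref{pbm:primal_kpca}, I get
\begin{equation*}
  \langle \Gamma W, H \rangle = \sum_{i=1}^n \sum_{j=1}^s [\Gamma W]_{ij}\, h_{ij} = \sum_{j=1}^s \left\langle \sum_{i=1}^n h_{ij}\, \phi(x_i),\; w_j \right\rangle,
\end{equation*}
where the second equality swaps the (finite) order of summation and absorbs the scalars $h_{ij}$ into the $\mH$-valued vector using linearity of $\langle \cdot, \cdot \rangle$ in its first argument.

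Then I would recognize the final expression as $\langle V, W \rangle_{\mH^s}$, with $V = \big(\sum_{i=1}^n h_{ij}\phi(x_i)\big)_{j=1}^s \in \mH^s$; by symmetry of the real inner product this equals $\langle W, V \rangle_{\mH^s}$. Comparing with the defining relation of the adjoint and invoking uniqueness forces $\Gamma^\sharp H = V$, which is exactly the claimed formula. There is no genuine obstacle here — the statement is essentially a one-line computation — so the only point requiring care is bookkeeping: keeping the Frobenius structure on $\reals^{n \times s}$ cleanly separated from the product structure on $\mH^s$, and ordering the double sum so that the $i$-sum is the one absorbed into the $\mH$-valued entry while the $j$-sum remains as the outer $\mH^s$ pairing.
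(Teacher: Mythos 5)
Your proposal is correct and follows the same route as the paper's own proof: both expand the Frobenius pairing $\langle H, \Gamma W\rangle$, swap the finite double sum, and use linearity of the $\mH$ inner product to identify the adjoint via its defining relation. No gaps.
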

\begin{proof}
    Let $W = [w_j]_{j=1}^s \in \mH^s$ and $H \in \R^{n \times s}$. It holds that 
    \begin{align*}
        \langle H, \Gamma W \rangle_{\R^{n \times s}}
        &= \sum_{i=1}^n \sum_{j=1}^s h_{ij} [\Gamma W]_{ij} \\
        &= \sum_{i=1}^n \sum_{j=1}^s h_{ij} \langle \phi(x_i), w_j \rangle_\mH \\
        &= \sum_{j=1}^s \left \langle \sum_{i=1}^n h_{ij} \phi(x_i) , w_j \right \rangle_\mH\\
        &= \left \langle \left [\sum_{i=1}^n h_{ij} \phi(x_i) \right ]_{j=1}^s , W \right \rangle_{\mH^s}
    \end{align*}
    which conludes the proof.
\end{proof}
We are now ready to prove that $g^\star(\Gamma ^\sharp H) = \Tr \left( \sqrt{H^\top G H}\right)$.
\begin{proof}
    Let $W = [w_j]_{j=1}^s \in \mH^s$ and $H \in \R^{n \times s}$.
    \begin{align*}
        \sup_{\mG(W) \curlyeqprec I_s} \langle \Gamma^\sharp H, W \rangle_{\mH^s} 
        &= \sup_{\mG(W) \curlyeqprec I_s} \sum_{j=1}^s \langle w_j, \sum_{i=1}^n h_{ij} \phi(x_i) \rangle_\mH
    \end{align*}
    Since $\mH = \Span \left( \{ \phi(x_i) \}_{i=1}^n \right ) \bigoplus \left ( \Span \left (\{ \phi(x_i) \}_{i=1}^n\right ) \right )^\top $ and using orthogonality properties, we can parameterize $W$ using a matrix $A \in \R^{n \times s}$ so that 
    \begin{equation*}
        \forall j \in [s], w_j = \sum_{i=1}^n a_{ij} \phi(x_i).
    \end{equation*}
    Then 
    \begin{align*}
       \langle \Gamma ^\sharp H, W \rangle_{\mH^s} &= \sum_{j=1}^s \langle \sum_{i=1}^n h_{ij} \phi(x_i) , \sum_{l=1}^n a_{lj} \phi(x_l) \rangle_\mH \\
       &= \Tr A^\top G H.
    \end{align*}
    Moreover, $\mG(W) = A^\top G A$. We thus fall back on the problem
    \begin{align*}
        \sup_{A ^\top G A \curlyeqprec I_s } \Tr A^\top G H.
    \end{align*}
    By performing the change of variable $B = G^{\frac{1}{2}} A$ (possible since $G$ is full rank), the problem becomes 
    \begin{align*}
        \sup_{B ^\top B \curlyeqprec I_s } \Tr B^\top G^{\frac{1}{2}} H.
    \end{align*}
    We recognize the Fenchel-Legendre transform of the indicator of the convex hull of the unit ball for the spectral norm, which is known to be the Schatten $1$-norm (also known as nuclear norm) that we denote $\norm{\cdot}_{S_1}$ in what follows.
    We have thus proven that
    \begin{equation*}
        g^\star (\Gamma ^\sharp H) = \norm{G^{\frac{1}{2}} H}_{S_1}.
    \end{equation*}
    A simple application of the SVD decomposition then suffices to see that
    \begin{equation*}
        \norm{G^{\frac{1}{2}} H}_{S_1} = \Tr \sqrt{H^\top G H},
    \end{equation*}
    which conludes.
\end{proof}
\subsection{Proof of \texorpdfstring{\Cref{prop:gradient_spectral_norm}}{Proposition 3.3}}
\begin{proof}
    We begin by rewriting $\pi(H) = R \circ c (H)$ where $R(X)=\Tr \sqrt{X}$ and $c(H)=H^\top GH$.
	Note that $R(X)=\Tr \sqrt{X} = \sum_{i=1}^s \sqrt{\lambda_i(X)}$, with $\lambda_i(x)$ the $i$-th largest eigenvalue of $X$. 
    Also, $R(X)=r(\lambda(X))$, with $r(z)=\sum_{i=1}^s \sqrt{z_i}$, where $z_i$ indicates the $i$-th component of $z$ and $\lambda(X)$ is the eigenvalues function defined as $\lambda(X)=[\lambda_1(X), \lambda_2(X), \dots, \lambda_s(X)]^\top $. 
    Note that $r$ is symmetric to permutations, so $R$ is a symmetric spectral function over the set of symmetric matrices $\mathbb{S}^{s}$. Therefore, the gradient of $R$ is (see Theorem 1.1 in \cite{lewis1996derivatives}):
	\begin{equation}
		\nabla R(X) = U^\top \text{diag}(\nabla r(\lambda(X))) U,
	\end{equation}
	with $U$ any orthogonal matrix such that $X=U^\top \text{diag}(\lambda(X)) U$. The gradient of $r$ is
	\begin{equation}
		\nabla r(z) = \frac{1}{2\sqrt{z}},
	\end{equation}
	for $z>0$ (which is the case for $G$ positive definite and $H$ full rank). Moreover
	\begin{equation}
		\nabla c(H) = 2GH.
	\end{equation}
	Finally by the chain rule we have
	\begin{equation}
		\begin{split}
			\nabla \pi(H) &= 2GH U^\top \text{diag}(\frac{1}{2\sqrt{\lambda(H^\top GH)}}) U,
		\end{split}
	\end{equation}
	where $U \in \mathbb{R}^{s \times s}$ is any orthogonal matrix satisfying
	\begin{equation}
		H^ \top GH = U^\top \text{diag}\left( \lambda(H^\top GH) \right) U.
	\end{equation}
	Therefore,
	\begin{equation}
		\nabla \pi(H) = GHU^\top DU,
	\end{equation}
	with $D=\text{diag}\left(\frac{1}{\sqrt{\lambda(H^\top GH)}}\right)$.
\end{proof}
\subsection{Proof of \texorpdfstring{\Cref{prop:optimal_svd_dual}}{Proposition 3.5}}
\begin{proof}
Writing the dual cost with $H = U_r^\top \sqrt{\Sigma_r}$, we get 
\begin{align*}
    \frac{1}{2} \Tr(H^\top H) - \Tr(\sqrt(H^\top G H)) &= \frac{1}{2} \Tr(\sqrt{\Sigma_r} U_r U_r^\top \sqrt{\Sigma_r}) - \Tr(\sqrt{\sqrt{\Sigma_r} U_r G U_r^\top \sqrt{\Sigma_r}}) \\
    &= \frac{1}{2} \Tr{(\Sigma_r)} - \Tr{\sqrt{\Sigma_r^2}} \\
    &= - \frac{1}{2} \Tr{(\Sigma_r)}.
\end{align*}
As this quantity correspond to (minus) the primal cost and srong duality holds, then $H$ is a dual solution.
\end{proof}
\subsection{Proof of \texorpdfstring{\Cref{prop:link_critical_reconstruction}}{Proposition 3.6}}
\begin{proof}
    First remark that for all $H \in \R^{n \times s}$, 
    \begin{align*}
        \nabla J (H) = H H^\top H - GH.
    \end{align*}
    Let $\hat{H}$ be a critical point of the dual function, we denote by $\hat{H}^\top G \hat{H} = U^\top S U$ the SVD of $\hat{H}^\top G \hat{H}$.
    The condition $0 \in \partial \left( H \mapsto \frac{1}{2} \norm{H}^2_\fro - \pi(H) \right)(\hat{H})$ implies that 
    \begin{align*}
        \hat{H} = G \hat{H} U^\top S^{-\frac{1}{2}} U 
    \end{align*}
    Thus 
    \begin{align*}
       \hat{H}^\top \hat{H} &=  \hat{H}^\top G \hat{H} U^\top S^{-\frac{1}{2}} U \\
       &= U^\top S U U^\top S^{-\frac{1}{2}} U \\
       &= U^\top S^{\frac{1}{2}} U,
    \end{align*}
    so that
    \begin{equation*}
        \hat{H} \hat{H}^\top \hat{H}  = G \hat{H},
    \end{equation*}
    showing that $\nabla J (\hat{H}) = 0$, which concludes the proof.
    \end{proof}
\subsection{Proof of \texorpdfstring{\Cref{proposition:projections}}{Proposition 3.8}}
\begin{proof}
Let $\hat{H}$ be a solution to the dual problem. We know that the primal solution satisfies 
\begin{equation*}
    \hat{W} \in \argmax_{W \in \mH^s, \mG(W)= I_s} \langle W, \Gamma^\sharp \hat{H} \rangle.
\end{equation*}
We can express each $\hat{w}_j = \sum_{i=1}^n \hat{a}_{ij} \phi(x_i)$, and the problem becomes 
\begin{equation*}
    \max_{A \in \R^{n \times s}} \langle A, G\hat{H} \rangle \quad \st A^\top G A = I_s.
\end{equation*}
By doing the change of variables $B = G^{\frac{1}{2}}A$, we want to solve 
\begin{equation} \label{eq:prob_B}
    \max_{B \in \R^{n \times s}} \langle B, G^{\frac{1}{2}}\hat{H} \rangle \quad \st B^\top B = I_s
\end{equation}
Let $G^{\frac{1}{2}} \hat{H} = V^\top S U$ be the SVD decomposition of $G^{\frac{1}{2}} \hat{H}$. We have that $\hat{B} = V^\top U$ maximizes \ref{eq:prob_B}, which in turn gives $\hat{A} = G^{-\frac{1}{2}} V^\top U$. \\
We can now express $\hat{A}$ using $\hat{H}$, as
\begin{equation*}
    \hat{H} = G^{-\frac{1}{2}} V^\top S U
\end{equation*}
implies that 
\begin{equation*}
    \hat{H} U^\top S^{-1} U = \hat{A}.
\end{equation*}
Remarking that $\hat{H}^\top G \hat{H} = U^\top S^2 U$ is the SVD of $\hat{H}^\top G \hat{H}$ allows to conclude the proof.
\end{proof}
\section{More Experimental Results}
\subsection{Kernel Choice}
Numerical evaluations are conducted to assess the effect of different kernel functions.
In fact, different kernels give different eigenspectra of the Gram matrix. 
In general, numerical algebra solvers' accuracy and required number of iterations may depend on the eigenspectrum of $G$. 
For example, Randomized SVD (RSVD) is known to be less accurate with data matrices whose eigenspectrum decays slowly \cite{halko2011}, which is often the case with real-world noisy data.
On the other hand, in the main body we show that our method is mostly unaffected by the shape of the eigenspectrum of $G$. 

We conduct further experiments to verify that our method is effective when we use other kernels as well. 
The results in Table \ref{tab:speedup:rbf} below show the KPCA training time with Gaussian kernel, while in Table \ref{tab:speedup:high} in the main body the kernel was set as the Laplacian. 
The speedups are clear also with the Gaussian kernel.

\begin{table*}[h]
	\caption{{KPCA Training Time} with Gaussian kernel for $\delta = 10^{-2}$.
	Speedup factor w.r.t. RSVD. 
	}
	\label{tab:speedup:rbf}
	\centering
	\begin{tabular}{lcccccc}
		\toprule
		\multirow{2}{*}{Task} & \multirow{2}{*}{$n$} & \multicolumn{4}{c}{Time (s)}  & \multicolumn{1}{c}{Speedup} \\\cmidrule(lr){3-6}
		& & SVD & Lanczos & RSVD & Ours & Factor\\	
		\midrule
		Synth 1     & 7000  & 87.33  &  1.05  & 3.04   & \textbf{0.56}  & 5.41 \\ 	
		Protein     & 14895 & 979.84 &  3.86  & 10.61  & \textbf{1.08}  & 9.81 \\ 	
		RCV1        & 20242 & -      &  21.03 & 19.69  & \textbf{3.76}  & 5.23 \\ 
		CIFAR-10    & 60000 & -      &  47.94 & 197.99 & \textbf{13.47} & 14.70 \\ 	
		\bottomrule
	\end{tabular}
\end{table*}

Moreover, the results in Table \ref{tab:huber:laplacian} below show the test MSE on the corrupted Iris dataset used in \Cref{sec:experiments:huber} in the main body with Laplacian kernel, while in the main body was set as the Gaussian. 
The Huber loss shows improved robustness also with the Laplacian kernel.

\begin{table}[h]
	\centering
	\caption{MSE on contaminated Iris dataset using Huber loss with Laplacian kernel.}
	\label{tab:huber:laplacian}
	\begin{tabular}{lccc}
		\toprule
		$\tau$ & $\tfrac{1}{2}\norm{\cdot}^2$ & $H_\kappa^2$ & $H_\kappa^1$ \\
		\midrule
		10 & 3.024668 &	2.862165 & 2.762514 \\
		25 & 4.580805 & 4.415508 & 4.337033 \\
		50 & 8.998080 & 8.824152 & 8.808865 \\
		75 & 15.695008 & 15.507481 & 15.590248 \\
		100 & 24.671558 & 24.671039 & 24.671153 \\
		\bottomrule
	\end{tabular}
\end{table}

\subsection{Comparative Analysis}
We conduct comparative experiments between our sparse/robust KPCA with recent sparse/robust KPCA methods. 
First, we compare with $\ell_{2,1}$-RKPCA \cite{wang2020} and with L1-KPCA \cite{kim2020}, which are recent relevant robust KPCA methods from the literature that have been shown experimentally to exhibit good robustness for this task. 
The hyperparameter in \cite{wang2020}, which controls the importance of the $\ell_{2,1}$-norm penalization term, is selected in the range suggested by the authors in their paper. 
In Table \ref{tab:huber:cmp}, we report reconstruction error (MSE) for the Iris data, contaminated as described in \Cref{sec:experiments:huber} in the main body for multiple noise levels. 
Higher MSE means that the learned subspace is influenced by the outliers, while lower MSE corresponds to more robust models.

\begin{table}[h]
    \centering
    \caption{Robust KPCA comparison. MSE on contaminated Iris dataset.}
    \label{tab:huber:cmp}
    \begin{tabular}{lccccc}
		\toprule
        $\tau$ & $\ell_{2,1}$-RKPCA & L1-KPCA & $\tfrac{1}{2}\norm{\cdot}^2$ & $H_\kappa^2$ & $H_\kappa^1$ \\
        \midrule
        10 & 6.965341 & 6.829537 & 7.591059 & 6.833484 & 7.381284 \\
        25 & 7.437400 & 7.598856 & 7.910846 & 7.182663 & 7.687518 \\
        50 & 8.113397 & 8.085395 & 8.691805 & 8.045477 & 8.430957 \\
        75 & 9.734140 & 9.547392 & 9.782740 & 9.259353 & 9.465706 \\
        100 & 10.934794 & 10.855314 & 11.183650 & 10.824293 & 10.791766 \\
        \bottomrule
    \end{tabular}
\end{table}    

Additionally, we compare with SSKPCA \cite{guo2019} and with SKPCA \cite{wang2016}, which are recent relevant sparse KPCA methods from the literature. 
In Table \ref{tab:epsilon:cmp}, the reconstruction error ratio is the ratio between the reconstruction error using the components learned with the corresponding sparse KPCA method and the reconstruction error from the dense problem. 
Ratios closer to 1 are better. 
The settings and dataset are the ones from \Cref{sec:experiments:epsilon} in the main body. 
The sparsity parameter $\lambda$ in SKPCA and $\gamma$ in SSKPCA are selected to obtain a fixed sparsity percentage in the matrix of coefficients.

\begin{table}[h]
    \centering
    \caption{Sparse KPCA comparison. Reconstruction error ratio.}
    \label{tab:epsilon:cmp}
    \begin{tabular}{lcccc}
		\toprule
        Sparsity (\%) & SKPCA  & SSKPCA  & $\ell_\epsilon^2$ &  $\ell_\epsilon^\infty$ \\
        \midrule
        10 & 1.19053 & 1.17019 & 1.04675 & 1.01685 \\
        20 & 1.67440 & 1.18649 & 1.06966 & 1.03824 \\
        30 & 1.83307 & 1.18794 & 1.08153 & 1.04247 \\
        40 & 2.30358 & 1.20480 & 1.09440 & 1.06308 \\
        50 & 2.75248 & 1.22205 & 1.10164 & 1.08127 \\
        \bottomrule
    \end{tabular}
\end{table}        

The robust KPCA in \cite{kim2020} uses L1-norm KPCA, i.e., maximizing variance with respect to the L1-norm, while we consider variance-like objectives with infimal convolution with the squared norm, so we can recover the standard KPCA as a special case. 
The method in \cite{wang2020} introduces extra variables and multiple $\ell_{2,1}$-norm penalizations in the objective and consequently multiple penalty hyperparameters which makes it harder to tune than our proposed method with a single parameter $\kappa$; the optimization is performed with alternating updates with three additional optimization subproblems, which require to compute the SVD of an $n \times s$ matrix at each iteration. 
SKPCA \cite{wang2016} and SSKPCA \cite{guo2019} relax and modify KPCA to use ElasticNet optimization for promoting sparsity. To get sparsity, SKPCA make use of the matrix $G^{-1/2}$, which requires computing the full SVD of $G$, and SSKPCA's iterative algorithm is initialized with the top $s$ eigenvectors of $G$, requiring to compute the truncated SVD of $G$. 
In comparison, we get sparsity without performing such costly operation, as our modelling considers the SVD of the much smaller $s \times s$ matrix $H^\top GH$.
Overall, our formulation performs similarly without having to compute the full SVD.
\end{document}